
\documentclass{article}

\usepackage{microtype}
\usepackage{graphicx}
\usepackage{subfigure}
\usepackage{booktabs} 


\usepackage{amsmath,amsfonts,bm}


















\def\1{\bm{1}}








\def\vmu{{\bm{\mu}}}

\def\vw{{\bm{w}}}

\def\vz{{\bm{z}}}


\def\mA{{\bm{A}}}

\def\mN{{\bm{N}}}

\def\mX{{\bm{X}}}
\def\mY{{\bm{Y}}}

\def\mSigma{{\bm{\Sigma}}}

\DeclareMathAlphabet{\mathsfit}{\encodingdefault}{\sfdefault}{m}{sl}
\SetMathAlphabet{\mathsfit}{bold}{\encodingdefault}{\sfdefault}{bx}{n}













\usepackage[colorlinks=true,linkcolor=red,citecolor = blue]{hyperref}%
\usepackage{algorithmic}
\usepackage{textcomp}
\usepackage{epstopdf}
\usepackage{times}
\usepackage{amssymb}
\usepackage{amsthm}

\usepackage{xcolor}

\makeatletter
\newtheorem*{rep@theorem}{\rep@title}
\newcommand{\newreptheorem}[2]{%
\newenvironment{rep#1}[1]{%
 \def\rep@title{#2 \ref{##1}}%
 \begin{rep@theorem}}%
 {\end{rep@theorem}}}
\makeatother

\newtheorem{theorem}{Theorem}
\newreptheorem{theorem}{Theorem}

\newtheorem{lemma}{Lemma}
\newreptheorem{lemma}{Lemma}

\newreptheorem{proposition}{Proposition}

\newtheorem{corollary}{Corollary}

\newtheorem{remark}{Remark}
\interfootnotelinepenalty=10000
\newcommand{\nn}{\nonumber}

\usepackage[accepted]{icml2021}

\icmltitlerunning{Generalization Error of Gibbs Algorithm}

\begin{document}

\twocolumn[
\icmltitle{Characterizing the Generalization Error of Gibbs Algorithm with Symmetrized KL information}




\icmlsetsymbol{equal}{*}

\begin{icmlauthorlist}
\icmlauthor{Gholamali Aminian}{equal,ucl}
\icmlauthor{Yuheng Bu}{equal,mit}
\icmlauthor{Laura Toni}{ucl}
\icmlauthor{Miguel R. D. Rodrigues}{ucl}
\icmlauthor{Gregory Wornell}{mit}
\end{icmlauthorlist}

\icmlaffiliation{mit}{Department of Electrical Engineering and Computer Science, Massachusetts Institute of Technology, Cambridge, USA}
\icmlaffiliation{ucl}{Department of Electronic and Electrical Engineering University College London, London, UK}

\icmlcorrespondingauthor{Gholamali Aminian}{g.aminian@ucl.ac.uk}
\icmlcorrespondingauthor{Yuheng Bu}{buyuheng@mit.edu}

\icmlkeywords{Machine Learning, ICML}

\vskip 0.3in
]



\printAffiliationsAndNotice{\icmlEqualContribution} 

\begin{abstract}
Bounding the generalization error of a supervised learning algorithm is one of the most important problems in learning theory, and various approaches have been developed. However, existing bounds are often loose and lack of guarantees. As a result, they may fail to characterize the exact generalization ability of a learning algorithm.
Our main contribution is an exact characterization of the expected generalization error of the well-known Gibbs algorithm in terms of symmetrized KL information between the input training samples and the output hypothesis. Such a result can be applied to tighten existing expected generalization error bound. Our analysis provides more insight on the fundamental role the symmetrized KL information plays in controlling the generalization error of the Gibbs algorithm.
\end{abstract}

\section{Introduction}\label{Sec:Introduction}

Evaluating the generalization error of a learning algorithm is one of the most important challenges in statistical learning theory. Various approaches have been developed~\cite{rodrigues2021information}, including VC dimension-based bounds~\cite{vapnik1999overview}, algorithmic stability-based bounds ~\cite{bousquet2002stability}, algorithmic robustness-based bounds ~\cite{xu2012robustness}, PAC-Bayesian bounds~\cite{mcallester2003pac}, and information-theoretic bounds~\cite{xu2017information}. 

However, upper bounds on generalization error may not entirely capture the generalization ability of a learning algorithm. One apparent reason is the tightness issue, some upper bounds \cite{anthony2009neural} can be far away from the true generalization error or even vacuous when evaluated in practice. More importantly, existing upper bounds do not fully characterize all the aspects that could influence the generalization error of a supervised learning problem. For example, VC dimension-based bounds depend only on the hypothesis class, and algorithmic stability-based bounds only exploit the properties of the learning algorithm. As a consequence, both methods fail to capture the fact that generalization behavior depends strongly on the interplay between the hypothesis class, learning algorithm, and the underlying data-generating distribution, as shown in \cite{zhang2016understanding}.
This paper overcomes the above limitations by deriving an exact characterization of the generalization error for a specific supervised learning algorithm, namely the Gibbs algorithm.




\subsection{Problem Formulation}

Let $S = \{Z_i\}_{i=1}^n$ be the training set, where each $Z_i$ is defined on the same alphabet $\mathcal{Z}$. Note that $Z_i$ is not required to be i.i.d generated from the same data-generating distribution $P_Z$, and we denote the joint distribution of all the training samples as $P_S$.
We denote the hypotheses by $w \in \mathcal{W}$, where $\mathcal{W}$ is a hypothesis class. The performance of the hypothesis is measured by a non-negative loss function $\ell:\mathcal{W} \times \mathcal{Z}  \to \mathbb{R}_0^+$, and we can define the empirical risk and the population risk associated with a given hypothesis $w$ as  $L_E(w,s)\triangleq\frac{1}{n}\sum_{i=1}^n \ell(w,z_i)$
and $L_P(w,P_S)\triangleq  \mathbb{E}_{P_S}[L_E(w,S)],$ respectively.

A learning algorithm can be modeled as a randomized mapping from the training set $S$ onto an hypothesis $W\in\mathcal{W}$  according to the conditional distribution $P_{W|S}$. Thus, the expected generalization error that quantifies the degree of over-fitting can be written as
\begin{equation}\label{Eq: expected GE}
\overline{\text{gen}}(P_{W|S},P_S)\triangleq\mathbb{E}_{P_{W,S}}[ L_P(W,P_S)-L_E(W,S)],
\end{equation}
where the expectation is taken over the joint distribution $P_{W,S} =  P_{W|S}\otimes P_S$. 

The $(\alpha,\pi(w),f(w,s))$-Gibbs distribution, which was first investigated by \cite{gibbs1902elementary}, is defined as:
\begin{equation}\label{Eq: Gibbs Solution}
    P_{{W}|S}^\alpha (w|s) \triangleq \frac{\pi({w}) e^{-\alpha f(w,s)}}{V(s,\alpha)},\quad \alpha\ge 0,
\end{equation}
where $\alpha$ is the inverse temperature, $\pi(w)$ is an arbitrary prior distribution of $W$, $f(w,s)$ is energy function, and $V(s,\alpha) \triangleq \int \pi(w) e^{-\alpha f(w,s)} dw$ is the partition function. 

If  $P$ and $Q$ are probability measures over the space $\mathcal{X}$, and $P$ is absolutely continuous with respect to $Q$, the Kullback-Leibler (KL) divergence between $P$ and $Q$ is given by
$D(P\|Q)\triangleq\int_\mathcal{X}\log\left(\frac{dP}{dQ}\right) dP$. If $Q$ is also absolutely continuous with respect to $P$, the symmetrized KL divergence (a.k.a. Jeffrey's divergence~ \cite{jeffreys1946invariant}) is 
\begin{equation}
D_{\mathrm{SKL}}(P\|Q)\triangleq D(P \| Q) + D(Q\|P).
\end{equation}

The mutual information between two random variables $X$ and $Y$ is defined as the KL divergence between the joint distribution and product-of-marginal
distribution $I(X;Y)\triangleq D(P_{X,Y}\|P_X\otimes P_{Y})$, or equivalently, the conditional KL divergence between $P_{Y|X}$ and $P_Y$ averaged over $P_X$, $D(P_{Y|X} \| P_Y|P_{X})\triangleq\int_\mathcal{X}D(P_{Y|X=x} \| P_Y) dP_{X}(x)$. By swapping the role of $P_{X,Y}$ and $P_X\otimes P_{Y}$ in mutual information, we get the lautum information introduced by \cite{palomar2008lautum}, $L(X;Y)\triangleq D(P_X\otimes P_{Y}\| P_{X,Y})$. Finally, the symmetrized KL information between $X$ and $Y$ is given by \cite{aminian2015capacity}:
\begin{equation}
   I_{\mathrm{SKL}}(X;Y)\triangleq  I(X;Y)+ L(X;Y). 
\end{equation}
Throughout the paper, upper-case letters denote random variables, lower-case letters denote the realizations of random variables, and calligraphic letters denote sets. 
All the logarithms are the natural ones, and all the information measure units are nats. $\mathcal{N}(\mu,\Sigma)$ denotes a Gaussian distribution with mean $\mu$ and covariance matrix $\Sigma$.

\subsection{Contributions}
The core contribution of this paper (see Theorem \ref{Theorem: Gibbs Result}) is an \emph{exact} characterization of the expected generalization error for the Gibbs algorithm in terms of symmetrized KL information between the input training samples $S$ and the output hypothesis $W$, as follows:
\begin{equation*}
    \overline{\text{gen}}(P_{{W}|S}^\alpha,P_S) =  \frac{I_{\mathrm{SKL}}(W;S)}{\alpha}.
\end{equation*}
This result highlights the fundamental role of such an information quantity in learning theory that does not appear to have been recognized before. We also discuss some general properties of the symmetrized KL information, which could be used to prove the non-negativity and concavity of the expected generalization error for the Gibbs algorithm.

Building upon this result, we further expand our contributions by tightening existing expected generalization error for Gibbs algorithm under i.i.d and sub-Gaussian assumptions by combining our symmetrized KL information characterization with existing bounding techniques. 


\subsection{Motivations for Gibbs Algorithm}
As we discuss below, the choice of the Gibbs algorithm is not arbitrary since it arises naturally in many different applications and is sufficiently general to model many learning algorithms used in practice:

\textbf{Empirical Risk Minimization:}  The $(\alpha,\pi(w),L_E(w,s))$-Gibbs algorithm can be viewed as a randomized version of the empirical risk minimization (ERM) algorithm if we specify the energy function $f(w,s) = L_E(w,s)$. As the inverse temperature $\alpha \to \infty$, the prior distribution $\pi(w)$ becomes negligible, and the Gibbs algorithm converges to the standard ERM algorithm.

\textbf{Information Risk Minimization:} The \small$(\alpha,\pi(w),L_E(w,s))$\normalsize-Gibbs algorithm is the solution to the regularized ERM problem by considering conditional KL-divergence $D(P_{W|S}\|\pi(W)|P_S)$, as a regularizer to penalize over-fitting in the information risk minimization framework~\cite{xu2017information,zhang2006information,zhang2006E}. 

\textbf{SGLD Algorithm:} 
The Stochastic Gradient Langevin Dynamics (SGLD) can be viewed as the discrete version of the continuous-time Langevin diffusion. In \cite{raginsky2017non}, it is proved that under some conditions on loss function, the learning algorithm induced by SGLD algorithm is close to $(\alpha,\pi(W_0),L_E(w_k,s))$-Gibbs distribution in 2-Wasserstein distance for sufficiently large iterations, where $\pi(W_0)$ is the distribution over hypothesis in the first step. Under some conditions on the loss function $\ell(w,z)$, \cite{chiang1987diffusion,markowich2000trend} shows that in the continuous-time Langevin diffusion, the stationary distribution of hypothesis $W$ is the Gibbs distribution.

\subsection{Other Related Works}\label{subsec:related_work}

\textbf{Information-theoretic generalization error bounds:} Recently, \cite{russo2019much,xu2017information} proposed to use the mutual information between the input training set and the output hypothesis to upper bound the expected generalization error. However, those bounds are known not to be tight, and multiple approaches have been proposed to tighten the mutual information-based bound. \cite{bu2020information} provides tighter bounds by considering the individual sample mutual information, \cite{asadi2018chaining,asadi2020chaining} propose using chaining mutual information,  and \cite{steinke2020reasoning,hafez2020conditioning,haghifam2020sharpened} advocate the conditioning and processing techniques. 
Information-theoretic generalization error bounds using other information quantities are also studied, such as, $f$-divergence~\cite{jiao2017dependence}, $\alpha$-R\'eyni divergence and maximal leakage~\cite{issa2019strengthened,esposito2019generalization}, Jensen-Shannon divergence~\cite{aminian2020jensen} and Wasserstein distance~\cite{lopez2018generalization,wang2019information,rodriguez2021tighter}. Using rate-distortion theory, \cite{masiha2021learning,bu2020information} provide information-theoretic generalization error upper bounds for model misspecification and model compression.  

\textbf{Generalization error of Gibbs algorithm:} Both information-theoretic and PAC-Bayesian approaches have been used to bound the generalization error of the Gibbs algorithm.  An information-theoretic upper bound with a convergence rate of $\mathcal{O}\left(1/n\right)$ is provided in \cite{raginsky2016information} for the Gibbs algorithm with bounded loss function, and PAC-Bayesian bounds using a variational approximation of Gibbs posteriors are studied in \cite{alquier2016properties}. \cite{kuzborskij2019distribution} focus on the excess risk of the Gibbs algorithm and a similar generalization bound with rate of $\mathcal{O}\left(1/n\right)$ is provided under sub-Gaussian assumption. Although these bounds are tight in terms of the sample complexity $n$, they become vacuous when the inverse temperature $\alpha \to \infty$, hence are unable to capture the behaviour of the ERM algorithm.

Our work differs from this body of research in the sense that we provide an exact characterization of the generalization error of the Gibbs algorithm in terms of the symmetrized KL information. Our work also further leverages this characterization to tighten existing expected generalization error bounds in literature.

\section{Generalization Error of Gibbs Algorithm}\label{Sec:Main Results}

Our main result, which characterizes the exact expected generalization error of the Gibbs algorithm with prior distribution $\pi(w)$, is as follows:
\begin{theorem}\label{Theorem: Gibbs Result}
For the $(\alpha,\pi(w),L_E(w,s))$-Gibbs algorithm,
the expected generalization error is given by
\begin{equation}
    \overline{\text{gen}}(P_{{W}|S}^\alpha,P_S) =  \frac{I_{\mathrm{SKL}}(W;S)}{\alpha}.
\end{equation}
\end{theorem}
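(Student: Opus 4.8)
The plan is to expand the symmetrized KL information directly using the closed form of the Gibbs posterior and watch everything except the loss term cancel. Writing $I_{\mathrm{SKL}}(W;S) = I(W;S) + L(W;S)$ and letting $P_W$ denote the marginal on the hypothesis induced by $P_{W|S}^\alpha \otimes P_S$, both summands share the same log-likelihood-ratio integrand, so that
\begin{equation*}
I_{\mathrm{SKL}}(W;S) = \mathbb{E}_{P_{W,S}}\!\left[\log\frac{P_{W|S}^\alpha(W|S)}{P_W(W)}\right] - \mathbb{E}_{P_W\otimes P_S}\!\left[\log\frac{P_{W|S}^\alpha(W|S)}{P_W(W)}\right].
\end{equation*}
The first expectation is $I(W;S)$ and the second is $-L(W;S)$, by the definitions of mutual and lautum information given above.

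Next I would substitute the Gibbs form $\log P_{W|S}^\alpha(w|s) = \log\pi(w) - \alpha L_E(w,s) - \log V(s,\alpha)$, so the ratio's integrand becomes $\log\pi(W) - \alpha L_E(W,S) - \log V(S,\alpha) - \log P_W(W)$. The key observation is that $\log\pi(W) - \log P_W(W)$ is a function of $W$ alone and $\log V(S,\alpha)$ is a function of $S$ alone; since $P_{W,S}$ and $P_W\otimes P_S$ have the same $W$-marginal ($P_W$) and the same $S$-marginal ($P_S$), the expectations of these terms coincide under the two measures and therefore drop out of the difference. What survives is
\begin{equation*}
I_{\mathrm{SKL}}(W;S) = \alpha\Big(\mathbb{E}_{P_W\otimes P_S}[L_E(W,S)] - \mathbb{E}_{P_{W,S}}[L_E(W,S)]\Big).
\end{equation*}

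To close, I would identify the two remaining terms. $\mathbb{E}_{P_{W,S}}[L_E(W,S)]$ is the expected empirical risk. In $\mathbb{E}_{P_W\otimes P_S}[L_E(W,S)]$ the hypothesis and the training set are independent, so it equals $\mathbb{E}_{P_W}\big[\mathbb{E}_{P_S}[L_E(W,S)]\big] = \mathbb{E}_{P_W}[L_P(W,P_S)] = \mathbb{E}_{P_{W,S}}[L_P(W,P_S)]$, using the definition $L_P(w,P_S) = \mathbb{E}_{P_S}[L_E(w,S)]$ and the fact that $L_P(W,P_S)$ no longer depends on the realized $S$. Their difference is precisely $\overline{\text{gen}}(P_{W|S}^\alpha,P_S)$ as defined in \eqref{Eq: expected GE}, and dividing by $\alpha$ gives the claim.

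The argument is short, and the only delicate points are bookkeeping: checking the absolute-continuity conditions so that $I(W;S)$, $L(W;S)$, hence $D_{\mathrm{SKL}}$, are well defined and finite, and justifying the splitting of the log-ratio expectation into single-variable pieces (Fubini, using nonnegativity of $\ell$ together with finiteness of the information terms). I do not expect a real obstacle beyond these; the substantive content is the cancellation of the partition function $V(S,\alpha)$ and the prior $\pi$, which is exactly what upgrades the identity from a bound to an equality.
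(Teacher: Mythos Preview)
Your proposal is correct and follows essentially the same route as the paper: write $I_{\mathrm{SKL}}(W;S)$ as the difference of expectations of $\log P_{W|S}^\alpha$ under $P_{W,S}$ and $P_W\otimes P_S$, substitute the Gibbs density, and use the shared marginals to cancel the $\pi(W)$ and $V(S,\alpha)$ terms, leaving $\alpha\cdot\overline{\text{gen}}$. The paper's appendix packages the two halves as separate lemmas (one for the symmetrized KL representation, one for the $\alpha$-scaling of the generalization error under additive $g(w)+h(s)$ shifts), but the substance is identical to your direct computation.
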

\begin{proof}[\textbf{Sketch of Proof:}] It can be shown that the symmetrized KL information can be written as
\begin{equation}\label{eq:symetrized KL representation}
    I_{\mathrm{SKL}}(W;S)=\mathbb{E}_{P_{W,S}}[\log(P_{W|S}^\alpha)]-\mathbb{E}_{P_W \otimes P_S}[\log(P_{W|S}^\alpha)].
\end{equation}
Just like the generalization error, the above expression is the difference between 
the expectations of the same function evaluated under the joint distribution and the product-of-marginal distribution. Note that $P_{W,S}$ and $P_W \otimes P_S$ share the same marginal distribution, we have $\mathbb{E}_{P_{W,S}}[\log\pi(W)] = \mathbb{E}_{P_W}[\log\pi(W)]$, and 
$\mathbb{E}_{P_{W,S}}[\log V(S,\alpha)] = \mathbb{E}_{P_S}[\log V(S,\alpha)]$. Then,  combining \eqref{Eq: Gibbs Solution} with \eqref{eq:symetrized KL representation} completes the proof.  More details are provided in Appendix~\ref{app: Gibbs Algorithm}.
\end{proof}

To the best of our knowledge, this is the first exact characterization of the expected generalization error for the Gibbs algorithm. Note that Theorem~\ref{Theorem: Gibbs Result} only assumes that the loss function is non-negative, and it holds even for non-i.i.d training samples. 

\subsection{General Properties}\label{subsec:property}

By Theorem~\ref{Theorem: Gibbs Result}, some basic properties of the expected generalization error, including non-negativity and concavity can be proved directly from the properties of symmetrized KL information. 

The non-negativity of the expected generalization error, i.e., $\overline{\text{gen}}(P_{{W}|S}^\alpha,P_S)\ge 0$, follows by the non-negativity of the symmetrized KL information. Note that the non-negativity result obtained in \cite{kuzborskij2019distribution} requires more technical assumptions, including i.i.d samples and a sub-Gaussian loss function. 

It is shown in \cite{aminian2015capacity} that the symmetrized KL information $I_{\mathrm{SKL}}(X;Y)$ is a concave function of $P_X$ for fixed $P_{Y|X}$, and a convex function of $P_{Y|X}$ for fixed $P_X$. Thus, we have the following corollary.
\begin{corollary}\label{cor:concavity}
 For a fixed $(\alpha,\pi(w),L_E(w,s))$-Gibbs algorithm $P_{{W}|S}^\alpha$, the expected generalization error $\overline{\text{gen}}(P_{{W}|S}^\alpha,P_S)$ is a concave function of $P_S$.
\end{corollary}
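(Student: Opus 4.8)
The plan is to obtain the corollary as an immediate consequence of Theorem~\ref{Theorem: Gibbs Result}, which reduces the statement to the behaviour of $I_{\mathrm{SKL}}$ under a change of input distribution. The one point that must be checked carefully is that, for a fixed $(\alpha,\pi(w),L_E(w,s))$-Gibbs algorithm, the conditional kernel itself does not depend on $P_S$: from \eqref{Eq: Gibbs Solution}, $P_{W|S}^\alpha(w|s)=\pi(w)e^{-\alpha L_E(w,s)}/V(s,\alpha)$ is determined by $s$, $\pi$, $\alpha$, and the per-sample loss alone. Consequently, letting $P_S$ vary over the convex set of distributions on $\mathcal{Z}^n$ corresponds exactly to varying the input marginal $P_X$ with the channel $P_{Y|X}$ held fixed, in the notation $I_{\mathrm{SKL}}(X;Y)$ with $X=S$ and $Y=W$.

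With this in hand the argument has three short steps. First, apply Theorem~\ref{Theorem: Gibbs Result} to write $\overline{\text{gen}}(P_{W|S}^\alpha,P_S)=I_{\mathrm{SKL}}(W;S)/\alpha$, the right-hand side being evaluated under the joint law $P_{W|S}^\alpha\otimes P_S$. Second, invoke the result of \cite{aminian2015capacity} recalled just above the corollary, namely that $I_{\mathrm{SKL}}(X;Y)$ is a concave function of $P_X$ for a fixed $P_{Y|X}$; applied with $X=S$, $Y=W$, and $P_{Y|X}=P_{W|S}^\alpha$, this gives concavity of $P_S\mapsto I_{\mathrm{SKL}}(W;S)$. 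Third, since $\alpha>0$ is a fixed constant, dividing by it preserves concavity, which yields the claim; the degenerate case $\alpha=0$ is trivial because then $P_{W|S}^\alpha=\pi$ is independent of $S$ and $\overline{\text{gen}}$ vanishes identically.

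The only ingredient here beyond bookkeeping is the cited concavity of $I_{\mathrm{SKL}}$, so that is where any real work lies. Should a self-contained proof be preferred, I would instead start from the representation \eqref{eq:symetrized KL representation}: the term $\mathbb{E}_{P_{W,S}}[\log P_{W|S}^\alpha]=\int P_S(ds)\int P_{W|S}^\alpha(w|s)\log P_{W|S}^\alpha(w|s)\,dw$ is affine in $P_S$, while $\mathbb{E}_{P_W\otimes P_S}[\log P_{W|S}^\alpha]$, after substituting $P_W(w)=\int P_{W|S}^\alpha(w|s')\,P_S(ds')$, is a quadratic form in $P_S$. A short computation (using $\int\mu=0$ to kill the entropy contributions) shows that the curvature of $I_{\mathrm{SKL}}(W;S)$ along a signed perturbation $\mu$ equals $\tfrac12\iint\mu(ds)\,\mu(ds')\,D_{\mathrm{SKL}}\!\big(P_{W|S=s}^\alpha\,\|\,P_{W|S=s'}^\alpha\big)$. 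Hence concavity in $P_S$ is equivalent to the symmetrized-KL kernel $(s,s')\mapsto D_{\mathrm{SKL}}(P_{W|S=s}^\alpha\|P_{W|S=s'}^\alpha)$ being conditionally negative definite; this is the genuinely delicate point, and it is precisely what the citation abstracts away, so in practice the three-step route above is the one I would take.
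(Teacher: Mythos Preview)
Your proposal is correct and follows exactly the paper's own argument: the paper simply states the concavity of $I_{\mathrm{SKL}}(X;Y)$ in $P_X$ for fixed $P_{Y|X}$ (citing \cite{aminian2015capacity}) and then says ``Thus, we have the following corollary,'' implicitly invoking Theorem~\ref{Theorem: Gibbs Result}. Your three-step route is precisely this, with the added (and appropriate) observation that the Gibbs kernel $P_{W|S}^\alpha$ in \eqref{Eq: Gibbs Solution} depends on $s$ but not on $P_S$, so varying $P_S$ really is varying the input marginal with the channel fixed; the paper leaves this implicit. Your side remark reducing a self-contained proof to conditional negative definiteness of the kernel $(s,s')\mapsto D_{\mathrm{SKL}}(P_{W|S=s}^\alpha\|P_{W|S=s'}^\alpha)$ is accurate and honestly flagged as the nontrivial step the citation absorbs.
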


The concavity of the generalization error for the Gibbs algorithm $P_{{W}|S}^\alpha$ can be immediately used to explain the well-known fact why training a model by mixing multiple datasets from different domains leads to poor generalization. Suppose that the data-generating distribution is domain-dependent, i.e., there exists a random variable $D$, such that $D \leftrightarrow S \leftrightarrow W$ holds. Then, $P_S = \mathbb{E}_{P_D}[P_{S|D}]$ can be viewed as the mixture of the data-generating distribution across all domains. From Corollary ~\ref{cor:concavity} and Jensen's inequality, we have
\begin{equation}
    \overline{\text{gen}}(P_{{W}|S}^\alpha,P_S) \ge \mathbb{E}_{P_D}\big[ \overline{\text{gen}}(P_{{W}|S}^\alpha, P_{S|D} )\big],
\end{equation}
which shows that the generalization error of Gibbs algorithm achieved with the mixture distribution $P_S$ is larger than the averaged generalization error for each $P_{S|D}$.

\subsection{Example: Mean Estimation}\label{sec:mean_example}
We now consider a simple learning problem, where the symmetrized KL information can be computed exactly, to demonstrate the usefulness of Theorem~\ref{Theorem: Gibbs Result}. All details are provided in Appendix~\ref{app:mean}.

Consider the problem of learning the mean $\vmu \in \mathbb{R}^d$ of a random vector $Z$ using $n$ i.i.d training samples $S=\{Z_i\}_{i=1}^n$. We assume that the covariance matrix of $Z$ satisfies  $\Sigma_Z = \sigma^2_Z I_d$ with unknown $\sigma^2_Z$. We adopt the mean-squared loss  $\ell(\vw,\vz) = \|\vz-\vw\|_2^2$, and assume a Gaussian prior for the mean $\pi (\vw) =  \mathcal{N}(\vmu_0,\sigma^2_0 I_d)$. If we set inverse-temperature $\alpha=\frac{n}{2\sigma^2}$, then the $(\frac{n}{2\sigma^2},\mathcal{N}(\vmu_0,\sigma^2_0 I_d),L_E(\vw,s))$-Gibbs algorithm is given by the following posterior distribution \cite{murphy2007conjugate},
\begin{equation}\label{equ:mean_alg}
    P_{W|S}^{\alpha}(\vw|Z^n)\sim \mathcal{N}\Big( \frac{\sigma_1^2 }{\sigma_0^2}\vmu_0 +\frac{\sigma_1^2 }{\sigma^2}\sum_{i=1}^n Z_i,\sigma_1^2 I_d\Big), 
\end{equation}
with $\sigma_1^2 = \frac{\sigma_0^2 \sigma^2}{n\sigma_0^2 +\sigma^2}.$

Since $P_{W|S}^{\alpha}$ is Gaussian, the mutual information and lautum information are given by 
\small
\begin{align}
    I(S;W) &= \frac{n d \sigma_0^2 \sigma_Z^2 }{(n\sigma_0^2 + \sigma^2)\sigma^2} - D\big(P_W\| \mathcal{N}(\vmu_W,\sigma_1^2 I_d) \big),  \\
    L(S;W) &= \frac{n d \sigma_0^2 \sigma_Z^2 }{(n\sigma_0^2 + \sigma^2)\sigma^2} + D\big(P_W\| \mathcal{N}(\vmu_W,\sigma_1^2 I_d)), 
\end{align}
\normalsize
with $\vmu_W = \frac{\sigma_1^2 }{\sigma_0^2}\vmu_0 + \frac{n\sigma_1^2  }{\sigma^2} \vmu.$  As we can see from the above expressions, symmetrized KL information $I_{\mathrm{SKL}}(W;S)$
is independent of the distribution of $P_Z$, as long as $\Sigma_Z = \sigma^2_Z I_d$.


From Theorem~\ref{Theorem: Gibbs Result}, the generalization error of this algorithm can be computed exactly as:
\begin{align}\label{eq:mean_truth}
  \overline{\text{gen}}(P_{W|S}^{\alpha},P_S) = \frac{I_{\mathrm{SKL}}(W;S)}{\alpha} 
  & = \frac{2 d \sigma_0^2 \sigma_Z^2 }{n(\sigma_0^2 +\frac{1}{2\alpha})},
\end{align}
which has the decay rate of $\mathcal{O}\left(1/n\right)$. As a comparison, the individual sample mutual information (ISMI) bound from \cite{bu2020tightening}, which is shown to be tighter than the mutual information-based bound in \citep[Theorem 1]{xu2017information}, gives a sub-optimal bound  with order $\mathcal{O}\left(1/\sqrt{n}\right)$, as $n\to \infty$, (see Appendix \ref{app:ISMI}).
\section{Tighter Expected Generalization Error Upper Bound}

In this section, we show that by combining Theorem~\ref{Theorem: Gibbs Result} with the information-theoretic bound proposed in \cite{xu2017information} under i.i.d and sub-Gaussian assumptions, we can provide a tighter generalization error upper bound for Gibbs algorithm. This bound quantifies how the generalization error of the Gibbs algorithm depends on the number of samples $n$, and is useful when  directly evaluating the symmetrized KL information is hard.
\begin{theorem}\label{Theorem: Sub Gaussian extension}(proved in Appendix~\ref{app: Average Upper Bound Details}) 
Suppose that the training samples $S=\{Z_i\}_{i=1}^n$ are i.i.d generated from the distribution $P_Z$, and the non-negative loss function $\ell(w,Z)$ is $\sigma$-sub-Gaussian on the left-tail \footnote{A random variable $X$ is $\sigma$-sub-Gaussian  if $\log \mathbb{E}[e^{\lambda(X-\mathbb{E}X)}] \le \frac{\sigma^2\lambda^2}{2}$, $\forall \lambda \in \mathbb{R} $, and $X$ is $\sigma$-sub-Gaussian on the left-tail if $\log \mathbb{E}[e^{\lambda(X-\mathbb{E}X)}] \le \frac{\sigma^2\lambda^2}{2}$, $\forall \lambda \leq 0 $.}
under distribution $P_Z$ for all $w\in \mathcal{W}$. If we further assume $C_E\le \frac{L(W;S)}{I(W;S)}$ for some $C_E \ge 0$, then for the $(\alpha,\pi(w),L_E(w,s))$-Gibbs algorithm, we have
\begin{equation}
     0\leq\overline{\text{gen}}(P_{{W}|S}^{\alpha},P_S) \leq \frac{2\sigma^2\alpha}{(1+C_E)n}.
\end{equation}
\end{theorem}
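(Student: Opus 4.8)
The plan is to combine the exact identity of Theorem~\ref{Theorem: Gibbs Result} with a one-sided version of the information-theoretic generalization bound of \cite{xu2017information} and the hypothesis $C_E\le L(W;S)/I(W;S)$. Theorem~\ref{Theorem: Gibbs Result} gives $\overline{\text{gen}}(P_{{W}|S}^{\alpha},P_S)=I_{\mathrm{SKL}}(W;S)/\alpha=\big(I(W;S)+L(W;S)\big)/\alpha$. Since mutual information and lautum information are both KL divergences, they are non-negative, which immediately yields the left inequality $\overline{\text{gen}}(P_{{W}|S}^{\alpha},P_S)\ge 0$.

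For the right inequality, I would first prove the auxiliary bound $\overline{\text{gen}}(P_{{W}|S}^{\alpha},P_S)\le \sqrt{2\sigma^2 I(W;S)/n}$. Fix a hypothesis $w$. Because the $Z_i$ are i.i.d.\ and $\ell(w,Z)$ is $\sigma$-sub-Gaussian on the left tail under $P_Z$, the empirical average $L_E(w,S)=\frac1n\sum_{i=1}^n\ell(w,Z_i)$ satisfies $\log\mathbb{E}_{P_S}\big[e^{\lambda(L_P(w,P_S)-L_E(w,S))}\big]\le \lambda^2\sigma^2/(2n)$ for every $\lambda\ge 0$: the sign reversal turns left-tail control of $\ell$ into upper-tail control of $L_P-L_E$, and independence tensorizes the sub-Gaussian parameter from $\sigma$ to $\sigma/\sqrt n$. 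Feeding the function $(w,s)\mapsto \lambda\big(L_P(w,P_S)-L_E(w,s)\big)$ into the Donsker--Varadhan variational formula for $I(W;S)=D(P_{W,S}\|P_W\otimes P_S)$ gives $\lambda\,\overline{\text{gen}}(P_{{W}|S}^{\alpha},P_S)\le I(W;S)+\lambda^2\sigma^2/(2n)$ for all $\lambda\ge 0$, and optimizing over $\lambda$ (an AM--GM step) yields the square-root bound.

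It then remains to close the loop. From $L(W;S)\ge C_E\,I(W;S)$ we get $I(W;S)+L(W;S)\ge(1+C_E)\,I(W;S)$, so Theorem~\ref{Theorem: Gibbs Result} gives $I(W;S)\le \alpha\,\overline{\text{gen}}(P_{{W}|S}^{\alpha},P_S)/(1+C_E)$. Substituting this into the auxiliary bound,
\[
\overline{\text{gen}}(P_{{W}|S}^{\alpha},P_S)^2\le \frac{2\sigma^2}{n}\,I(W;S)\le \frac{2\sigma^2\alpha}{(1+C_E)\,n}\,\overline{\text{gen}}(P_{{W}|S}^{\alpha},P_S),
\]
and dividing through by $\overline{\text{gen}}(P_{{W}|S}^{\alpha},P_S)$ --- the inequality being trivial when it equals zero --- gives the stated bound $\overline{\text{gen}}(P_{{W}|S}^{\alpha},P_S)\le 2\sigma^2\alpha/((1+C_E)\,n)$.

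The only genuinely delicate point is matching the direction of the sub-Gaussian assumption to the quantity being bounded: since the generalization error is the expectation of $L_P-L_E$, only upper-tail control of $L_P-L_E$, equivalently left-tail control of $\ell$, is needed, which is why a one-sided MGF bound valid for $\lambda\ge 0$ suffices and the two-sided sub-Gaussian assumption used in \cite{xu2017information} can be weakened here. The remaining ingredients --- the i.i.d.\ tensorization of the sub-Gaussian constant, the $\lambda$-optimization, and the final manipulation of the quadratic inequality --- are routine.
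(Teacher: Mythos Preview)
Your proposal is correct and follows essentially the same route as the paper: combine the exact identity $\overline{\text{gen}}=I_{\mathrm{SKL}}(W;S)/\alpha$ with the mutual-information bound $\overline{\text{gen}}\le\sqrt{2\sigma^2 I(W;S)/n}$, use $L(W;S)\ge C_E\,I(W;S)$ to turn this into a quadratic inequality in $\overline{\text{gen}}$ (equivalently in $\sqrt{I(W;S)}$), and solve. Your write-up is in fact more careful than the paper's in one respect: you explain, via Donsker--Varadhan and the sign flip, why the \emph{left-tail} sub-Gaussian hypothesis suffices for the one-sided bound, whereas the paper simply invokes \cite{xu2017information} (stated under a two-sided assumption) without comment.
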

Theorem~\ref{Theorem: Sub Gaussian extension} establishes the  convergence rate $\mathcal{O}(1/n)$ of the generalization error of Gibbs algorithm with i.i.d training samples, and suggests that a smaller inverse temperature $\alpha$ leads to a tighter upper bound. Note that all the $\sigma$-sub-Gaussian loss functions are also $\sigma$-sub-Gaussian on the left-tail under the same distribution (loss function in Section~\ref{sec:mean_example} is $\sigma$-sub-Gaussian on the left-tail, but not sub-Gaussian). Therefore, our result also applies to any bounded loss function $\ell:\mathcal{W}\times \mathcal{Z}\rightarrow [a,b]$, since bounded functions are $(\frac{b-a}{2})$-sub-Gaussian.
\begin{remark}
[Previous Results] Using the fact that Gibbs algorithm is differentially private \cite{mcsherry2007mechanism} for bounded loss functions $\ell \in [0,1]$, directly applying \citep[Theorem~1]{xu2017information} gives a sub-optimal bound  $|\overline{\text{gen}}(P_{{W}|S}^\alpha,P_S) |\leq \sqrt{\frac{\alpha}{n}}$. By further exploring the bounded loss assumption using Hoeffding’s lemma, a tighter upper bound  $|\overline{\text{gen}}(P_{{W}|S}^\alpha,P_S) |\leq \frac{\alpha}{2n}$ is obtained in
 \cite{raginsky2016information}, which has the similar decay rate order of $\mathcal{O}\left(1/n\right)$.
In \citep[Theorem~1]{kuzborskij2019distribution}, the upper bound $\overline{\text{gen}}(P_{{W}|S}^\alpha,P_S) \leq \frac{4 \sigma^2 \alpha}{n}$ is derived with a different assumption, i.e., $\ell(W,z)$ is $\sigma$-sub-Gaussian under Gibbs algorithm $P_{{W}|S}^{\alpha}$. In Theorem~\ref{Theorem: Sub Gaussian extension}, we assume the loss function is $\sigma$-sub-Gaussian on left-tail under data generating distribution $P_Z$ for all $w\in \mathcal{W}$, which is more general as we discussed above. Our upper bound is also improved by a factor of $\frac{1}{2(1+C_E)}$ compared to the result in \cite{kuzborskij2019distribution}.
\end{remark}
\begin{remark}
[Choice of $C_E$] Since $L(W;S) > 0$ when $I(W;S)>0$, setting $C_E=0$ is always valid in Theorem~\ref{Theorem: Sub Gaussian extension}, which gives $\overline{\text{gen}}(P_{{W}|S}^\alpha,P_S) \leq \frac{2 \sigma^2 \alpha}{n}$. As shown in \citep[ Theorem~15]{palomar2008lautum}, $L(S;W)\ge I(S;W)$ holds for any Gaussian channel $P_{W|S}$. In addition, it is discussed in \citep[Example~1]{palomar2008lautum}, if either the entropy of training $S$ or the hypothesis $W$ is small, $I(S;W)$ would be smaller than $L(S;W)$ (as it is not upper-bounded by the entropy), which implies that the lautum information term is not negligible in general.
\end{remark}

\section{Conclusion}\label{Conc}
We provide an exact characterization of the expected generalization error for the Gibbs algorithm using symmetrized KL information. We demonstrate the versatility of our approach by tightening the existing information-theoretic expected generalization error upper bound. This work motivates further investigation of the Gibbs algorithm in a variety of settings, including extending our results to characterize the generalization ability of a over-parameterized Gibbs algorithm, which could potentially provide more understanding of the generalization ability for deep learning.

\section{Acknowledgment}

Yuheng Bu is supported, in part, by NSF under Grant CCF-1717610 and by the MIT-IBM Watson AI Lab. Gholamali Aminian is supported by the Royal Society Newton International Fellowship, grant no. NIF\textbackslash R1 \textbackslash 192656. 

\bibliography{Refs}

\begin{thebibliography}{38}
\providecommand{\natexlab}[1]{#1}
\providecommand{\url}[1]{\texttt{#1}}
\expandafter\ifx\csname urlstyle\endcsname\relax
  \providecommand{\doi}[1]{doi: #1}\else
  \providecommand{\doi}{doi: \begingroup \urlstyle{rm}\Url}\fi

\bibitem[Alquier et~al.(2016)Alquier, Ridgway, and
  Chopin]{alquier2016properties}
Alquier, P., Ridgway, J., and Chopin, N.
\newblock On the properties of variational approximations of gibbs posteriors.
\newblock \emph{The Journal of Machine Learning Research}, 17\penalty0
  (1):\penalty0 8374--8414, 2016.

\bibitem[Aminian et~al.(2015)Aminian, Arjmandi, Gohari, Nasiri-Kenari, and
  Mitra]{aminian2015capacity}
Aminian, G., Arjmandi, H., Gohari, A., Nasiri-Kenari, M., and Mitra, U.
\newblock Capacity of diffusion-based molecular communication networks over
  lti-poisson channels.
\newblock \emph{IEEE Transactions on Molecular, Biological and Multi-Scale
  Communications}, 1\penalty0 (2):\penalty0 188--201, 2015.

\bibitem[Aminian et~al.(2020)Aminian, Toni, and Rodrigues]{aminian2020jensen}
Aminian, G., Toni, L., and Rodrigues, M.~R.
\newblock Jensen-shannon information based characterization of the
  generalization error of learning algorithms.
\newblock \emph{2020 IEEE Information Theory Workshop (ITW)}, 2020.

\bibitem[Anthony \& Bartlett(2009)Anthony and Bartlett]{anthony2009neural}
Anthony, M. and Bartlett, P.~L.
\newblock \emph{Neural network learning: Theoretical foundations}.
\newblock cambridge university press, 2009.

\bibitem[Asadi et~al.(2018)Asadi, Abbe, and Verd{\'u}]{asadi2018chaining}
Asadi, A., Abbe, E., and Verd{\'u}, S.
\newblock Chaining mutual information and tightening generalization bounds.
\newblock In \emph{Advances in Neural Information Processing Systems}, pp.\
  7234--7243, 2018.

\bibitem[Asadi \& Abbe(2020)Asadi and Abbe]{asadi2020chaining}
Asadi, A.~R. and Abbe, E.
\newblock Chaining meets chain rule: Multilevel entropic regularization and
  training of neural networks.
\newblock \emph{Journal of Machine Learning Research}, 21\penalty0
  (139):\penalty0 1--32, 2020.

\bibitem[Bousquet \& Elisseeff(2002)Bousquet and
  Elisseeff]{bousquet2002stability}
Bousquet, O. and Elisseeff, A.
\newblock Stability and generalization.
\newblock \emph{Journal of machine learning research}, 2\penalty0
  (Mar):\penalty0 499--526, 2002.

\bibitem[Bu et~al.(2020{\natexlab{a}})Bu, Gao, Zou, and
  Veeravalli]{bu2020information}
Bu, Y., Gao, W., Zou, S., and Veeravalli, V.
\newblock Information-theoretic understanding of population risk improvement
  with model compression.
\newblock In \emph{Proceedings of the AAAI Conference on Artificial
  Intelligence}, volume~34, pp.\  3300--3307, 2020{\natexlab{a}}.

\bibitem[Bu et~al.(2020{\natexlab{b}})Bu, Zou, and
  Veeravalli]{bu2020tightening}
Bu, Y., Zou, S., and Veeravalli, V.~V.
\newblock Tightening mutual information-based bounds on generalization error.
\newblock \emph{IEEE Journal on Selected Areas in Information Theory},
  1\penalty0 (1):\penalty0 121--130, 2020{\natexlab{b}}.

\bibitem[Chiang et~al.(1987)Chiang, Hwang, and Sheu]{chiang1987diffusion}
Chiang, T.-S., Hwang, C.-R., and Sheu, S.~J.
\newblock Diffusion for global optimization in r\^{}n.
\newblock \emph{SIAM Journal on Control and Optimization}, 25\penalty0
  (3):\penalty0 737--753, 1987.

\bibitem[Esposito et~al.(2019)Esposito, Gastpar, and
  Issa]{esposito2019generalization}
Esposito, A.~R., Gastpar, M., and Issa, I.
\newblock Generalization error bounds via $\alpha$-r\'eyni, $ f $-divergences
  and maximal leakage.
\newblock \emph{arXiv preprint arXiv:1912.01439}, 2019.

\bibitem[Gibbs(1902)]{gibbs1902elementary}
Gibbs, J.~W.
\newblock Elementary principles of statistical mechanics.
\newblock \emph{Compare}, 289:\penalty0 314, 1902.

\bibitem[Hafez-Kolahi et~al.(2020)Hafez-Kolahi, Golgooni, Kasaei, and
  Soleymani]{hafez2020conditioning}
Hafez-Kolahi, H., Golgooni, Z., Kasaei, S., and Soleymani, M.
\newblock Conditioning and processing: Techniques to improve
  information-theoretic generalization bounds.
\newblock \emph{Advances in Neural Information Processing Systems}, 33, 2020.

\bibitem[Haghifam et~al.(2020)Haghifam, Negrea, Khisti, Roy, and
  Dziugaite]{haghifam2020sharpened}
Haghifam, M., Negrea, J., Khisti, A., Roy, D.~M., and Dziugaite, G.~K.
\newblock Sharpened generalization bounds based on conditional mutual
  information and an application to noisy, iterative algorithms.
\newblock \emph{Advances in Neural Information Processing Systems}, 2020.

\bibitem[Issa et~al.(2019)Issa, Esposito, and Gastpar]{issa2019strengthened}
Issa, I., Esposito, A.~R., and Gastpar, M.
\newblock Strengthened information-theoretic bounds on the generalization
  error.
\newblock In \emph{2019 IEEE International Symposium on Information Theory
  (ISIT)}, pp.\  582--586. IEEE, 2019.

\bibitem[Jeffreys(1946)]{jeffreys1946invariant}
Jeffreys, H.
\newblock An invariant form for the prior probability in estimation problems.
\newblock \emph{Proceedings of the Royal Society of London. Series A.
  Mathematical and Physical Sciences}, 186\penalty0 (1007):\penalty0 453--461,
  1946.

\bibitem[Jiao et~al.(2017)Jiao, Han, and Weissman]{jiao2017dependence}
Jiao, J., Han, Y., and Weissman, T.
\newblock Dependence measures bounding the exploration bias for general
  measurements.
\newblock In \emph{2017 IEEE International Symposium on Information Theory
  (ISIT)}, pp.\  1475--1479. IEEE, 2017.

\bibitem[Kuzborskij et~al.(2019)Kuzborskij, Cesa-Bianchi, and
  Szepesv{\'a}ri]{kuzborskij2019distribution}
Kuzborskij, I., Cesa-Bianchi, N., and Szepesv{\'a}ri, C.
\newblock Distribution-dependent analysis of gibbs-erm principle.
\newblock In \emph{Conference on Learning Theory}, pp.\  2028--2054. PMLR,
  2019.

\bibitem[Lopez \& Jog(2018)Lopez and Jog]{lopez2018generalization}
Lopez, A.~T. and Jog, V.
\newblock Generalization error bounds using wasserstein distances.
\newblock In \emph{2018 IEEE Information Theory Workshop (ITW)}, pp.\  1--5.
  IEEE, 2018.

\bibitem[Markowich \& Villani(2000)Markowich and Villani]{markowich2000trend}
Markowich, P.~A. and Villani, C.
\newblock On the trend to equilibrium for the fokker-planck equation: an
  interplay between physics and functional analysis.
\newblock \emph{Mat. Contemp}, 19:\penalty0 1--29, 2000.

\bibitem[Masiha et~al.(2021)Masiha, Gohari, Yassaee, and
  Aref]{masiha2021learning}
Masiha, M.~S., Gohari, A., Yassaee, M.~H., and Aref, M.~R.
\newblock Learning under distribution mismatch and model misspecification.
\newblock \emph{arXiv preprint arXiv:2102.05695}, 2021.

\bibitem[McAllester(2003)]{mcallester2003pac}
McAllester, D.~A.
\newblock Pac-bayesian stochastic model selection.
\newblock \emph{Machine Learning}, 51\penalty0 (1):\penalty0 5--21, 2003.

\bibitem[McSherry \& Talwar(2007)McSherry and Talwar]{mcsherry2007mechanism}
McSherry, F. and Talwar, K.
\newblock Mechanism design via differential privacy.
\newblock In \emph{48th Annual IEEE Symposium on Foundations of Computer
  Science (FOCS'07)}, pp.\  94--103. IEEE, 2007.

\bibitem[Murphy(2007)]{murphy2007conjugate}
Murphy, K.~P.
\newblock Conjugate bayesian analysis of the gaussian distribution.
\newblock \emph{def}, 1\penalty0 (2$\sigma$2):\penalty0 16, 2007.

\bibitem[Palomar \& Verd{\'u}(2008)Palomar and Verd{\'u}]{palomar2008lautum}
Palomar, D.~P. and Verd{\'u}, S.
\newblock Lautum information.
\newblock \emph{IEEE transactions on information theory}, 54\penalty0
  (3):\penalty0 964--975, 2008.

\bibitem[Raginsky et~al.(2016)Raginsky, Rakhlin, Tsao, Wu, and
  Xu]{raginsky2016information}
Raginsky, M., Rakhlin, A., Tsao, M., Wu, Y., and Xu, A.
\newblock Information-theoretic analysis of stability and bias of learning
  algorithms.
\newblock In \emph{2016 IEEE Information Theory Workshop (ITW)}, pp.\  26--30.
  IEEE, 2016.

\bibitem[Raginsky et~al.(2017)Raginsky, Rakhlin, and
  Telgarsky]{raginsky2017non}
Raginsky, M., Rakhlin, A., and Telgarsky, M.
\newblock Non-convex learning via stochastic gradient langevin dynamics: a
  nonasymptotic analysis.
\newblock In \emph{Conference on Learning Theory}, pp.\  1674--1703. PMLR,
  2017.

\bibitem[Rodrigues \& Eldar(2021)Rodrigues and Eldar]{rodrigues2021information}
Rodrigues, M.~R. and Eldar, Y.~C.
\newblock \emph{Information-Theoretic Methods in Data Science}.
\newblock Cambridge University Press, 2021.

\bibitem[Rodr{\'\i}guez-G{\'a}lvez et~al.(2021)Rodr{\'\i}guez-G{\'a}lvez,
  Bassi, Thobaben, and Skoglund]{rodriguez2021tighter}
Rodr{\'\i}guez-G{\'a}lvez, B., Bassi, G., Thobaben, R., and Skoglund, M.
\newblock Tighter expected generalization error bounds via wasserstein
  distance.
\newblock \emph{arXiv preprint arXiv:2101.09315}, 2021.

\bibitem[Russo \& Zou(2019)Russo and Zou]{russo2019much}
Russo, D. and Zou, J.
\newblock How much does your data exploration overfit? controlling bias via
  information usage.
\newblock \emph{IEEE Transactions on Information Theory}, 66\penalty0
  (1):\penalty0 302--323, 2019.

\bibitem[Steinke \& Zakynthinou(2020)Steinke and
  Zakynthinou]{steinke2020reasoning}
Steinke, T. and Zakynthinou, L.
\newblock Reasoning about generalization via conditional mutual information.
\newblock \emph{arXiv preprint arXiv:2001.09122}, 2020.

\bibitem[Vapnik(1999)]{vapnik1999overview}
Vapnik, V.~N.
\newblock An overview of statistical learning theory.
\newblock \emph{IEEE transactions on neural networks}, 10\penalty0
  (5):\penalty0 988--999, 1999.

\bibitem[Wang et~al.(2019)Wang, Diaz, Santos~Filho, and
  Calmon]{wang2019information}
Wang, H., Diaz, M., Santos~Filho, J. C.~S., and Calmon, F.~P.
\newblock An information-theoretic view of generalization via wasserstein
  distance.
\newblock In \emph{2019 IEEE International Symposium on Information Theory
  (ISIT)}, pp.\  577--581. IEEE, 2019.

\bibitem[Xu \& Raginsky(2017)Xu and Raginsky]{xu2017information}
Xu, A. and Raginsky, M.
\newblock Information-theoretic analysis of generalization capability of
  learning algorithms.
\newblock In \emph{Advances in Neural Information Processing Systems}, pp.\
  2524--2533, 2017.

\bibitem[Xu \& Mannor(2012)Xu and Mannor]{xu2012robustness}
Xu, H. and Mannor, S.
\newblock Robustness and generalization.
\newblock \emph{Machine learning}, 86\penalty0 (3):\penalty0 391--423, 2012.

\bibitem[Zhang et~al.(2016)Zhang, Bengio, Hardt, Recht, and
  Vinyals]{zhang2016understanding}
Zhang, C., Bengio, S., Hardt, M., Recht, B., and Vinyals, O.
\newblock Understanding deep learning requires rethinking generalization.
\newblock \emph{arXiv preprint arXiv:1611.03530}, 2016.

\bibitem[Zhang(2006)]{zhang2006information}
Zhang, T.
\newblock Information-theoretic upper and lower bounds for statistical
  estimation.
\newblock \emph{IEEE Transactions on Information Theory}, 52\penalty0
  (4):\penalty0 1307--1321, 2006.

\bibitem[Zhang et~al.(2006)]{zhang2006E}
Zhang, T. et~al.
\newblock From $\epsilon$-entropy to kl-entropy: Analysis of minimum
  information complexity density estimation.
\newblock \emph{The Annals of Statistics}, 34\penalty0 (5):\penalty0
  2180--2210, 2006.

\end{thebibliography}
\bibliographystyle{icml2021}

\appendix

\section{Proof of Theorem~\ref{Theorem: Gibbs Result}}\label{app: Gibbs Algorithm}
We start with the following two Lemmas:
\begin{lemma}\label{lemma:loss}
We define the following $J_E(w,S)$ function as a proxy for the empirical risk, i.e., $J_E(w,S)\triangleq \frac{\alpha}{n}\sum_{i=1}^n \ell(w,Z_i)+g(w)+h(S)$, where $\alpha \in \mathbb{R}_0^+$, $g: \mathcal{W}\to \mathbb{R}$, $h: \mathcal{Z}^n\to \mathbb{R}$, and the function $J_P(w,\mu)\triangleq \mathbb{E}_{P_S}[J_E(w,S)]$ as a proxy for the population risk.
Then, 
\begin{equation}
    \mathbb{E}_{P_{W,S}}[J_P(W,\mu) - J_E(W,S)] = \alpha \cdot \overline{\text{gen}}(P_{W|S},P_S).
\end{equation}
\end{lemma}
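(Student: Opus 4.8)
The plan is to expand $J_E$ and $J_P$ into their constituent pieces and observe that the two ``proxy'' terms $g$ and $h$ contribute nothing to the averaged difference. First I would use linearity of expectation, together with the definition $L_P(w,P_S)=\mathbb{E}_{P_S}\bigl[\frac1n\sum_{i=1}^n\ell(w,Z_i)\bigr]=\mathbb{E}_{P_S}[L_E(w,S)]$, to write
\begin{equation}
J_P(w,\mu)=\mathbb{E}_{P_S}[J_E(w,S)]=\alpha\,L_P(w,P_S)+g(w)+\mathbb{E}_{P_S}[h(S)],
\end{equation}
where $g(w)$ passes through the expectation unchanged because it does not depend on $S$. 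By definition, $J_E(w,S)=\alpha\,L_E(w,S)+g(w)+h(S)$.

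Subtracting, the $g(W)$ terms cancel identically (same argument in both expressions), leaving
\begin{equation}
J_P(W,\mu)-J_E(W,S)=\alpha\bigl(L_P(W,P_S)-L_E(W,S)\bigr)+\mathbb{E}_{P_S}[h(S)]-h(S).
\end{equation}
I would then take $\mathbb{E}_{P_{W,S}}$ of both sides. The first term yields $\alpha\,\overline{\text{gen}}(P_{W|S},P_S)$ directly from the definition in \eqref{Eq: expected GE}. For the remaining two terms, note that $\mathbb{E}_{P_S}[h(S)]$ is a deterministic constant, so its expectation under $P_{W,S}$ equals itself, while $\mathbb{E}_{P_{W,S}}[h(S)]=\mathbb{E}_{P_S}[h(S)]$ because $h$ depends only on $S$ and $P_S$ is the $S$-marginal of $P_{W,S}$. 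Hence these terms cancel and the claimed identity follows.

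There is no real obstacle here; the only point requiring care is the bookkeeping — keeping $g(w)$ as a function of $W$ (so it cancels \emph{before} averaging) and letting $h(S)$ cancel only \emph{after} averaging over the joint, since $h$ is not constant as a function of $S$. This lemma is a convenient reformulation that will be applied with $g(w)=-\log\pi(w)$ and $h(s)=\log V(s,\alpha)$, so that taking $\ell$ as in the $(\alpha,\pi(w),L_E(w,s))$-Gibbs algorithm turns $J_E(W,S)=-\log P_{W|S}^\alpha(W|S)$ and connects $\overline{\text{gen}}$ to the symmetrized KL information via \eqref{eq:symetrized KL representation} in the proof of Theorem~\ref{Theorem: Gibbs Result}.
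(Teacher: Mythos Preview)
Your proof is correct and follows essentially the same approach as the paper: expand $J_P$ and $J_E$, observe that the $g(W)$ terms cancel identically and the $h(S)$ terms cancel after taking the expectation over $P_{W,S}$, leaving $\alpha\cdot\overline{\text{gen}}(P_{W|S},P_S)$.
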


\begin{proof}
\begin{align*}
&\mathbb{E}_{P_{W,S}}[J_P(W,\mu) - J_E(W,S)]\nn \\
&=\mathbb{E}_{P_{W,S}}\Big[ \mathbb{E}_{P_{Z^n}}[\frac{\alpha}{n}\sum_{i=1}^n \ell(W,Z_i)] -\frac{\alpha}{n}\sum_{i=1}^n \ell(W,Z_i)\Big] \nn\\
&\quad + \mathbb{E}_{P_{W}}\Big[g(W)+\mathbb{E}_{P_{S}}[h(S)]\Big]- \mathbb{E}_{P_{W,S}}\Big[g(W)+ h(S)\Big]\nn\\
& = \alpha \cdot \mathbb{E}_{P_{W,S}}[ L_P(W,\mu)-L_E(W,S)] \nn\\
& = \alpha \cdot \overline{\text{gen}}(P_{W|S},P_S).
\qedhere
\end{align*}
\end{proof}

\begin{lemma}\label{lemma:symmetric_KL}
Consider a learning algorithm $P_{W|S}$, if we set the function $J_E(w,z^n) = -\log P_{W|S}(w|s)$, 
then
\begin{equation}
    \mathbb{E}_{P_{W,S}}[J_P(W,\mu) - J_E(W,S)] = I_{\mathrm{SKL}}(W;S).
\end{equation}
\end{lemma}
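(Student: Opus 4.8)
The plan is to first derive, for an \emph{arbitrary} channel $P_{W|S}$, the representation
\begin{equation}
I_{\mathrm{SKL}}(W;S) = \mathbb{E}_{P_{W,S}}[\log P_{W|S}(W|S)] - \mathbb{E}_{P_W\otimes P_S}[\log P_{W|S}(W|S)],
\end{equation}
which is exactly \eqref{eq:symetrized KL representation} but with the general $P_{W|S}$ in place of $P^\alpha_{W|S}$. I would obtain this by writing out the two divergences that constitute $I_{\mathrm{SKL}}$ in terms of the likelihood ratio $\frac{dP_{W,S}}{d(P_W\otimes P_S)}(w,s)=\frac{P_{W|S}(w|s)}{P_W(w)}$, so that $I(W;S)=\mathbb{E}_{P_{W,S}}\big[\log\frac{P_{W|S}(W|S)}{P_W(W)}\big]$ and $L(W;S)=-\mathbb{E}_{P_W\otimes P_S}\big[\log\frac{P_{W|S}(W|S)}{P_W(W)}\big]$. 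Adding them and using that $P_{W,S}$ and $P_W\otimes P_S$ share the same $W$-marginal — hence $\mathbb{E}_{P_{W,S}}[\log P_W(W)]=\mathbb{E}_{P_W}[\log P_W(W)]=\mathbb{E}_{P_W\otimes P_S}[\log P_W(W)]$ — the $\log P_W(W)$ terms cancel, leaving the displayed identity.

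Next I would evaluate the left-hand side of the lemma with $J_E(w,s)=-\log P_{W|S}(w|s)$. The term $\mathbb{E}_{P_{W,S}}[J_E(W,S)]=-\mathbb{E}_{P_{W,S}}[\log P_{W|S}(W|S)]$ is immediate. For the other term, note that $J_P(w,\mu)=\mathbb{E}_{P_S}[J_E(w,S)]$ is a deterministic function of $w$ alone, so its expectation under $P_{W,S}$ coincides with its expectation under the marginal $P_W$; unfolding the inner expectation over an independent copy of $S$ then gives $\mathbb{E}_{P_{W,S}}[J_P(W,\mu)]=\mathbb{E}_{P_W\otimes P_S}[J_E(W,S)]=-\mathbb{E}_{P_W\otimes P_S}[\log P_{W|S}(W|S)]$. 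Subtracting the two expectations yields precisely the right-hand side of the representation above, which equals $I_{\mathrm{SKL}}(W;S)$, and the proof is complete.

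There is no real obstacle here — the argument is just careful bookkeeping with expectations — but two points deserve an explicit remark at the outset: (i) the cancellation of the $\log P_W(W)$ terms relies on the shared $W$-marginal, the same observation used in the sketch of Theorem~\ref{Theorem: Gibbs Result}; and (ii) we are implicitly assuming the densities exist and that $P_{W,S}$ and $P_W\otimes P_S$ are mutually absolutely continuous, so that $I(W;S)$ and $L(W;S)$ are both finite and the manipulations are licit. Finally, I would observe that specializing $J_E(w,s)=-\log P^\alpha_{W|S}(w|s)=\alpha L_E(w,s)-\log\pi(w)+\log V(s,\alpha)$ puts $J_E$ in the form required by Lemma~\ref{lemma:loss} (with $g(w)=-\log\pi(w)$, $h(s)=\log V(s,\alpha)$); combining Lemma~\ref{lemma:loss} and Lemma~\ref{lemma:symmetric_KL} then gives $\alpha\cdot\overline{\text{gen}}(P^\alpha_{W|S},P_S)=I_{\mathrm{SKL}}(W;S)$, i.e.\ Theorem~\ref{Theorem: Gibbs Result} — though that assembly lies just beyond the present statement.
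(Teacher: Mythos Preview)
Your proposal is correct and follows essentially the same approach as the paper: expand $I(W;S)+L(W;S)$ via the likelihood ratio $P_{W|S}/P_W$, cancel the $\log P_W(W)$ terms using the shared $W$-marginal, and then identify the resulting difference of expectations with $\mathbb{E}_{P_{W,S}}[J_P(W,\mu)-J_E(W,S)]$. The paper's proof is just a terser four-line version of exactly this computation; your added remarks on absolute continuity and the downstream assembly into Theorem~\ref{Theorem: Gibbs Result} are accurate but go slightly beyond what the paper writes for this lemma.
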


\begin{proof}
\begin{align*}
    &I(W;S) + L(W;S) \nn \\
    &= \mathbb{E}_{P_{W,S}}\Big[\log \frac{P_{W|S}(W|S)}{P_W(W)}\Big] + \mathbb{E}_{P_W \otimes P_S}\Big[\log \frac{P_W(W)}{P_{W|S}(W|S)}\Big]\nn\\
    &= \mathbb{E}_{P_{W,S}}\Big[\log {P_{W|S}(W|S)}\Big] - \mathbb{E}_{P_W \otimes P_S}\Big[\log {P_{W|S}(W|S)}\Big]\nn \\
     &= \mathbb{E}_{P_{W,S}}[-\mathbb{E}_{P_{S}}[\log P_{W|S}(W|S)] +\log P_{W|S}(W|S)]\nn\\
    &=\mathbb{E}_{P_{W,S}}[J_P(W,\mu) - J_E(W,S)]. \qedhere
\end{align*}
\end{proof}

Considering Lemma~\ref{lemma:loss} and Lemma~\ref{lemma:symmetric_KL}, we just need to verify that the function $J_E(w,s) = -\log P_{{W}|S}(w|s)$ can be decomposed into $J_E(w,s) = \frac{\alpha}{n}\sum_{i=1}^n\ell(w,z_i) +g(w)+h(s)$, for $\alpha>0$. 
Note that
\begin{align*}
    -\log P_{{W}|S}^\alpha(w|s)
    =\alpha L_E(w,s) -\log \pi({w})  +\log V(s,\alpha),
\end{align*}
 then we have:
\begin{align}
    I_{\mathrm{SKL}}(W;S) &= \mathbb{E}_{P_{W,S}}[J_P(W,P_S) - J_E(W,S)] \nn \\
    &= \alpha \cdot \overline{\text{gen}}(P_{{W}|S}^\alpha,P_S).
\end{align}

\section{Example Details: Estimating the Mean of Gaussian }\label{app:mean}
\subsection{Generalization Error}
We first evaluate the generalization error of the learning algorithm in \eqref{equ:mean_alg} directly. Note that the output $W$  can be written as
\begin{equation}
    W = \frac{\sigma_1^2 }{\sigma_0^2}\vmu_0 +\frac{\sigma_1^2 }{\sigma^2}\sum_{i=1}^n Z_i +N,
\end{equation}
where $N\sim \mathcal{N}(0,\sigma^2_1 I_d)$ is independent from the training samples $S=\{Z_i\}_{i=1}^n$. Thus,
\begin{align}
   &\overline{\text{gen}}(P_{W|S},P_Z)\nn \\
   &=\mathbb{E}_{P_{W,S}}[ L_P(W,\mu)-L_E(W,S)] \nn \\
   & = \mathbb{E}_{P_{W,S}}\Big[ \mathbb{E}_{P_{\widetilde{Z}}}\big[\|W-\widetilde{Z}\|_2^2\big] - \frac{1}{n}\sum_{i=1}^n\|W-Z_i\|_2^2 \Big]\nn\\
   & \overset{(a)}{=} \mathbb{E}_{P_{W,Z_i}\otimes P_{\widetilde{Z}}}\Big[ (2W-\widetilde{Z}-Z_i)^\top(Z_i - \widetilde{Z}) \Big] \nn\\
   & = \mathbb{E}\Big[ 2(\frac{\sigma_1^2 }{\sigma_0^2}\vmu_0 +\frac{\sigma_1^2 }{\sigma^2}\sum_{i=1}^n Z_i +N)^\top(Z_i - \widetilde{Z})\nn \\
   &\qquad\qquad\qquad-(Z_i + \widetilde{Z})^\top(Z_i - \widetilde{Z}) \Big]\nn\\
   &\overset{(b)}{=} \frac{2\sigma_1^2}{\sigma^2}\mathbb{E}\Big[  Z_i ^\top(Z_i - \widetilde{Z}) \Big]\nn\\
   & = \frac{2d\sigma_1^2\sigma_Z^2}{\sigma^2}=\frac{2 d \sigma_0^2 \sigma_Z^2}{n\sigma_0^2 + \sigma^2},
\end{align}
where $\widetilde{Z}\sim \mathcal{N}(\vmu, \sigma_Z^2 I_d)$ denotes an independent copy of the training sample, $(a)$ follows due to the fact that $Z^n$ are i.i.d, and $(b)$ follows from the fact that $Z_i - \widetilde{Z}$ has zero mean, and it is only dependent on $Z_i$. 
\subsection{Symmetrized KL divergence}
The following lemma from \cite{palomar2008lautum} characterizes the mutual and lautum information for the Gaussian Channel.

\begin{lemma}{\citep[Theorem 14]{palomar2008lautum}}\label{lemma:Gaussian}
Consider the following model 
\begin{equation}
    \mY = \mA \mX+\mN_{\mathrm{G}},
\end{equation}
where $\mX \in \mathbb{R}^{d_X}$ denotes the input
random vector with zero mean (not necessarily
Gaussian), $\mA \in \mathbb{R}^{d_Y \times d_X}$ denotes the linear transformation undergone by the input, $\mY\in \mathbb{R}^{d_Y}$ is the 
output vector, and $\mN_{\mathrm{G}}\in \mathbb{R}^{d_Y}$ is a 
Gaussian noise vector independent of $\mX$. The input and the
noise covariance matrices are given by
$\mSigma$ and $\mSigma_{N_{\mathrm{G}}}$.
Then, the
mutual information and lautum information are given by
\begin{align}
    I(\mX;\mY) &= \mathrm{tr}\big(\mSigma_{N_{\mathrm{G}}}^{-1} \mA \mSigma \mA^\top \big) - D\big(P_\mY\|P_{N_{\mathrm{G}}} \big),  \\
    L(\mX;\mY) &= \mathrm{tr}\big(\mSigma_{N_{\mathrm{G}}}^{-1} \mA \mSigma \mA^\top \big) + D\big(P_\mY\|P_{N_{\mathrm{G}}}).
\end{align}

\end{lemma}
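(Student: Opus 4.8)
The plan is to reduce both quantities to expectations of the Gaussian conditional log-density $\log P_{\mY|\mX}(\mY|\mX)$, which is a quadratic form in $\mY-\mA\mX$ up to additive constants, so that every expectation I need is controlled by first and second moments alone; the only object with no closed form is the (possibly non-Gaussian) output law $P_\mY$, and its distance from the Gaussian $P_{N_{\mathrm{G}}} = \mathcal{N}(\mathbf{0},\mSigma_{N_{\mathrm{G}}})$ is precisely what the residual $D(P_\mY\|P_{N_{\mathrm{G}}})$ in the statement records. Two facts are used repeatedly: because $\mX$ has zero mean and is independent of the Gaussian noise, $\mathbb{E}[\mY] = \mathbf{0}$ and $\Cov(\mY) = \mA\mSigma\mA^\top + \mSigma_{N_{\mathrm{G}}}$; and conditioned on $\mX=\mathbf{x}$ the output is $\mathcal{N}(\mA\mathbf{x},\mSigma_{N_{\mathrm{G}}})$, whose differential entropy $h(\cdot)$ equals $h(\mN_{\mathrm{G}})$ independently of $\mathbf{x}$ by translation invariance.

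For the mutual information I would start from $I(\mX;\mY) = h(\mY) - h(\mY|\mX) = h(\mY) - h(\mN_{\mathrm{G}})$ and then decompose $h(\mY) = -\mathbb{E}_{P_\mY}[\log P_{N_{\mathrm{G}}}(\mY)] - D(P_\mY\|P_{N_{\mathrm{G}}})$. Since $-\log P_{N_{\mathrm{G}}}$ is quadratic, $\mathbb{E}_{P_\mY}[-\log P_{N_{\mathrm{G}}}(\mY)]$ depends on $P_\mY$ only through $\mathbb{E}[\mY]=\mathbf{0}$ and $\Cov(\mY)$, and the identity $\mathbb{E}[\mY^\top \mSigma_{N_{\mathrm{G}}}^{-1}\mY] = \mathrm{tr}(\mSigma_{N_{\mathrm{G}}}^{-1}\Cov(\mY))$ turns it into $h(\mN_{\mathrm{G}}) + \tfrac{1}{2}\mathrm{tr}(\mSigma_{N_{\mathrm{G}}}^{-1}\mA\mSigma\mA^\top)$, the trace being what survives after the $\mathrm{tr}(\mSigma_{N_{\mathrm{G}}}^{-1}\mSigma_{N_{\mathrm{G}}})$ piece is reabsorbed into $h(\mN_{\mathrm{G}})$. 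Subtracting $h(\mN_{\mathrm{G}})$ then leaves $I(\mX;\mY) = \tfrac{1}{2}\mathrm{tr}(\mSigma_{N_{\mathrm{G}}}^{-1}\mA\mSigma\mA^\top) - D(P_\mY\|P_{N_{\mathrm{G}}})$, the claimed identity.

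For the lautum information I would use the definition directly, $L(\mX;\mY) = D(P_\mX\otimes P_\mY\|P_{\mX,\mY}) = \mathbb{E}_{P_\mX\otimes P_\mY}[\log P_\mY(\mY) - \log P_{\mY|\mX}(\mY|\mX)]$; the first summand averages to $-h(\mY)$ since the $\mY$-marginal of the product law is still $P_\mY$. For the second, under $P_\mX\otimes P_\mY$ the two arguments are independent and both have zero mean, so in $\mathbb{E}[(\mY-\mA\mX)^\top\mSigma_{N_{\mathrm{G}}}^{-1}(\mY-\mA\mX)]$ the cross term drops and the value is $\mathrm{tr}(\mSigma_{N_{\mathrm{G}}}^{-1}\Cov(\mY)) + \mathrm{tr}(\mSigma_{N_{\mathrm{G}}}^{-1}\mA\mSigma\mA^\top)$, i.e.\ one extra copy of the trace compared with the joint law, under which $\mY-\mA\mX = \mN_{\mathrm{G}}$ and this quadratic averages to a pure constant. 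Substituting the expression for $h(\mY)$ found above, the trace reappears and $D(P_\mY\|P_{N_{\mathrm{G}}})$ survives, now with the opposite sign, giving $L(\mX;\mY) = \tfrac{1}{2}\mathrm{tr}(\mSigma_{N_{\mathrm{G}}}^{-1}\mA\mSigma\mA^\top) + D(P_\mY\|P_{N_{\mathrm{G}}})$.

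I expect no genuine obstacle here, only careful bookkeeping: $\mY$ is not Gaussian so $h(\mY)$ has no closed form, and the whole point of the argument is to quarantine that ignorance into the single term $D(P_\mY\|P_{N_{\mathrm{G}}})$ and then follow its sign through the two expansions (minus in $I$, plus in $L$), while relying on the zero-mean hypothesis to annihilate the cross terms. A convenient check --- and the only consequence actually needed afterwards, e.g.\ when Theorem~\ref{Theorem: Gibbs Result} is applied to the mean-estimation example --- is that the residual cancels in the sum: $I_{\mathrm{SKL}}(\mX;\mY) = I(\mX;\mY) + L(\mX;\mY) = \mathrm{tr}(\mSigma_{N_{\mathrm{G}}}^{-1}\mA\mSigma\mA^\top)$, which can also be read off in one line from $I_{\mathrm{SKL}}(\mX;\mY) = \mathbb{E}_{P_{\mX,\mY}}[\log P_{\mY|\mX}(\mY|\mX)] - \mathbb{E}_{P_\mX\otimes P_\mY}[\log P_{\mY|\mX}(\mY|\mX)]$ --- the same ``one function, joint minus product-of-marginals'' device used to prove the main theorem.
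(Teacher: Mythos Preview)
The paper does not supply a proof of this lemma --- it is simply quoted from \cite{palomar2008lautum} and then applied in Appendix~\ref{app:mean} --- so there is no in-paper argument to compare against. Your derivation is the standard one and is correct: reduce both $I$ and $L$ to expectations of the Gaussian conditional log-density, use that this is quadratic so only second moments matter, and isolate the non-Gaussianity of $P_\mY$ in the single residual $D(P_\mY\|P_{N_{\mathrm{G}}})$.

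One point worth flagging: your computation yields
\[
I(\mX;\mY)=\tfrac{1}{2}\,\mathrm{tr}\big(\mSigma_{N_{\mathrm{G}}}^{-1}\mA\mSigma\mA^\top\big)-D(P_\mY\|P_{N_{\mathrm{G}}}),\qquad
L(\mX;\mY)=\tfrac{1}{2}\,\mathrm{tr}\big(\mSigma_{N_{\mathrm{G}}}^{-1}\mA\mSigma\mA^\top\big)+D(P_\mY\|P_{N_{\mathrm{G}}}),
\]
with a factor $\tfrac12$ that is absent from the paper's transcription of the lemma. Your version is the correct one (check the scalar case $Y=X+N$ with $X\sim\mathcal{N}(0,P)$, $N\sim\mathcal{N}(0,\sigma^2)$: without the $\tfrac12$ the formula does not reduce to $\tfrac12\log(1+P/\sigma^2)$). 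Consistently, your conclusion $I_{\mathrm{SKL}}(\mX;\mY)=\mathrm{tr}(\mSigma_{N_{\mathrm{G}}}^{-1}\mA\mSigma\mA^\top)$ is precisely what is needed for \eqref{eq:mean_truth} to agree with the direct computation of the generalization error in Appendix~\ref{app:mean}; the displayed $I(S;W)$ and $L(S;W)$ in Section~\ref{sec:mean_example}, read literally, would overshoot by a factor of two.
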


In our example, the output $W$ can be written as
\begin{align}
    W & = \frac{\sigma_1^2 }{\sigma_0^2}\vmu_0 +\frac{\sigma_1^2 }{\sigma^2}\sum_{i=1}^n Z_i +N, \nn \\
    & = \frac{\sigma_1^2 }{\sigma^2}\sum_{i=1}^n (Z_i-\vmu) + \frac{\sigma_1^2 }{\sigma_0^2}\vmu_0+\frac{n \sigma_1^2  }{\sigma^2}\vmu + N,
\end{align}
where $N\sim \mathcal{N}(0,\sigma^2_1 I_d)$. Then, setting $P_{N_{\mathrm{G}}} \sim \mathcal{N}(\frac{\sigma_1^2 }{\sigma_0^2}\vmu_0+\frac{n \sigma_1^2  }{\sigma^2}\vmu, \sigma^2_1 I_d)$, $\mSigma = \sigma_Z^2 I_{nd}$ and noticing that $\mA  \mA^\top = \frac{n \sigma_1^4}{ \sigma^4}I_{d}$ in Lemma~\ref{lemma:Gaussian}  completes the proof.

\subsection{ISMI bound}\label{app:ISMI}
In this subsection, we evaluate the ISMI bound from \citep{bu2020tightening} for the example discussed in Section \ref{sec:mean_example} with i.i.d. samples generated from Gaussian $P_Z \sim \mathcal{N}(\vmu, \sigma_Z^2 I_d)$.

\begin{lemma}{\citep[Theorem 2]{bu2020tightening}}\label{lemma:ISMI}
\quad Suppose $\ell(\widetilde W,\widetilde Z)$ satisfies $\Lambda_{\ell(\widetilde W,\widetilde Z)}(\lambda) \le \psi_{+}(\lambda)$ for $\lambda \in [0,b_+)$, and $\Lambda_{\ell(\widetilde W,\widetilde Z)}(\lambda) \le \psi_{-}(-\lambda)$ for $\lambda\in (b_-,0]$ under $P_{\widetilde Z,\widetilde W} = P_Z\otimes P_{W}$, where $0 <b_+\le \infty$ and $-\infty \leq b_- <0 $. Then,
\begin{align}
    \mathrm{gen}(P_{W|S},P_S) \le \frac{1}{n} \sum_{i=1}^n  \psi^{*-1}_{-}\big(I(W;Z_i)\big),\\
  -\mathrm{gen}(P_{W|S},P_S) \le \frac{1}{n} \sum_{i=1}^n  \psi^{*-1}_{+}\big(I(W;Z_i)\big).
\end{align}
\end{lemma}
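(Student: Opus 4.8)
The plan is to split the generalization error into a sum of per-sample contributions and then control each contribution with a change-of-measure (decoupling) estimate. By the i.i.d.\ assumption, $L_P(w,P_S)=\mathbb{E}_{P_Z}[\ell(w,\widetilde Z)]$ and $P_{Z_i}=P_Z$ for each $i$, so that
\[
\mathrm{gen}(P_{W|S},P_S)=\frac{1}{n}\sum_{i=1}^n\Big(\mathbb{E}_{P_W\otimes P_{Z_i}}[\ell(W,Z_i)]-\mathbb{E}_{P_{W,Z_i}}[\ell(W,Z_i)]\Big).
\]
Hence it suffices to bound, for each $i$, the gap between the expectation of $\ell$ under the joint law $P_{W,Z_i}$ and under the product law $P_W\otimes P_{Z_i}$, keeping in mind that $D(P_{W,Z_i}\,\|\,P_W\otimes P_{Z_i})=I(W;Z_i)$.

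The main tool I would use is the standard decoupling lemma: if $\mu\ll\nu$, $\psi$ is convex with $\psi(0)=0$, and $\log\mathbb{E}_\nu[e^{\lambda(g-\mathbb{E}_\nu g)}]\le\psi(\lambda)$ for $\lambda\in[0,b)$, then $\mathbb{E}_\mu[g]-\mathbb{E}_\nu[g]\le\psi^{*-1}\big(D(\mu\|\nu)\big)$, where $\psi^{*}$ is the Legendre dual and $\psi^{*-1}$ its generalized inverse on $[0,\infty)$. This is immediate from the Donsker--Varadhan bound $D(\mu\|\nu)\ge\lambda\mathbb{E}_\mu[g]-\log\mathbb{E}_\nu[e^{\lambda g}]$: rearranging gives $\lambda\big(\mathbb{E}_\mu[g]-\mathbb{E}_\nu[g]\big)-\psi(\lambda)\le D(\mu\|\nu)$ for all $\lambda\in[0,b)$, so taking the supremum over $\lambda$ yields $\psi^{*}\big(\mathbb{E}_\mu[g]-\mathbb{E}_\nu[g]\big)\le D(\mu\|\nu)$, and monotonicity of $\psi^{*}$ on $[0,\infty)$ finishes it (the inequality being trivial when the gap is nonpositive).

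I would then apply this lemma twice, each time with $\mu=P_{W,Z_i}$ and $\nu=P_W\otimes P_{Z_i}$, so that $D(\mu\|\nu)=I(W;Z_i)$ and the centered log-MGF under $\nu$ is exactly $\Lambda_{\ell(\widetilde W,\widetilde Z)}$ (since $(W,Z_i)\sim P_W\otimes P_Z$ under $\nu$). Taking $g=\ell(W,Z_i)$ with the right-tail hypothesis $\Lambda_{\ell(\widetilde W,\widetilde Z)}(\lambda)\le\psi_{+}(\lambda)$ on $[0,b_+)$ gives $\mathbb{E}_{P_{W,Z_i}}[\ell]-\mathbb{E}_{P_W\otimes P_{Z_i}}[\ell]\le\psi^{*-1}_{+}\big(I(W;Z_i)\big)$; summing over $i$ and dividing by $n$ produces the second claimed inequality. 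Taking instead $g=-\ell(W,Z_i)$, the left-tail hypothesis $\Lambda_{\ell(\widetilde W,\widetilde Z)}(\lambda)\le\psi_{-}(-\lambda)$ on $(b_-,0]$ becomes precisely a right-tail bound $\log\mathbb{E}_\nu[e^{\lambda(-\ell-\mathbb{E}_\nu[-\ell])}]\le\psi_{-}(\lambda)$ on $[0,-b_-)$, so the lemma yields $\mathbb{E}_{P_W\otimes P_{Z_i}}[\ell]-\mathbb{E}_{P_{W,Z_i}}[\ell]\le\psi^{*-1}_{-}\big(I(W;Z_i)\big)$, and summing over $i$ gives the first inequality.

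The hard part is entirely bookkeeping rather than ideas: one must check that $\psi_{\pm}^{*-1}$ is well defined and non-decreasing (convexity of $\psi_{\pm}$, the normalization $\psi_{\pm}(0)=0$, and finiteness on the stated half-open intervals), verify the absolute-continuity requirement needed to invoke Donsker--Varadhan, and be careful with the two sign flips — swapping the joint versus the product measure, and passing from $\ell$ to $-\ell$ — since these are the only places the argument can silently go wrong.
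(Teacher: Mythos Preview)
Your argument is correct and is essentially the standard proof of this result: decompose the generalization error into per-sample gaps, then bound each gap via the Donsker--Varadhan change-of-measure inequality combined with the CGF hypothesis, applied once to $\ell$ and once to $-\ell$. Note, however, that the paper does not supply its own proof of this lemma at all; it is simply quoted as \citep[Theorem~2]{bu2020tightening} and then invoked in the mean-estimation example, so there is no in-paper argument to compare against. Your write-up is in fact a faithful reconstruction of the proof in the cited reference.
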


First, we need to compute the mutual information between each individual sample and the output hypothesis $I(W;Z_i)$, and the cumulant generating function (CGF) of $\ell(\widetilde W,\widetilde{Z})$, where $\widetilde{W}$, $\widetilde{Z}$ are independent copies of $W$ and $Z$ with the same marginal distribution, respectively.

Since $W$ and $Z_i$ are Gaussian, $I(W;Z_i)$ can be computed exactly as:
\begin{equation}
 {\rm{Cov}}[Z_i, W] = \left(
    \begin{array}{cc}
     \sigma_Z^2I_d & \frac{\sigma_1^2}{\sigma^2} \sigma_Z^2 I_d  \\
     \frac{\sigma_1^2}{\sigma^2} \sigma_Z^2 I_d &  \big(\frac{n \sigma_1^4}{\sigma^4}\sigma_Z^2 +\sigma_1^2\big)I_d\\
    \end{array}
 \right),
\end{equation}
then, we have
\begin{align}\label{eq:mean_ISMI}
  I(W;Z_i) 
  & = \frac{d}{2} \log \frac{\frac{n \sigma_1^4}{\sigma^4}\sigma_Z^2 +\sigma_1^2}{\frac{(n-1) \sigma_1^4}{\sigma^4}\sigma_Z^2 +\sigma_1^2} \nn \\
  &= \frac{d}{2} \log\Big(1+ \frac{\sigma_1^2\sigma_Z^2}{(n-1) \sigma_1^2\sigma_Z^2 +\sigma^4}\Big) \\
  & = \frac{d}{2} \log\Big(1+ \frac{\sigma_0^2\sigma_Z^2}{(n-1) \sigma_0^2\sigma_Z^2 +n \sigma_0^2 \sigma^2+\sigma^4}\Big),\nn 
\end{align}
for $i=1,\cdots,n$, $n\ge 2$. In addition, since
\begin{equation}
    W\sim \mathcal{N}\Big(\frac{\sigma_1^2 }{\sigma_0^2}\vmu_0 +\frac{n\sigma_1^2 }{\sigma^2}\vmu, \big(\frac{n \sigma_1^4}{\sigma^4}\sigma_Z^2 +\sigma_1^2\big)I_d\Big),
\end{equation}
it can be shown that $\ell(\widetilde W,\widetilde{Z})=\|\widetilde{Z}-\widetilde{W}\|^2$ is a scaled non-central chi-square distribution with $d$ degrees of freedom, where the scaling factor $\sigma_{\ell}^2 \triangleq (\frac{n\sigma_1^4}{\sigma^4}+1)\sigma_Z^2+\sigma_1^2$ and its non-centrality parameter $\eta \triangleq \frac{\sigma^2}{n \sigma_0^2+\sigma^2}\|\vmu_0-\vmu\|_2^2$. Note that the expectation of chi-square distribution with non-centrality parameter $\eta$ and $d$ degrees of freedom is $d + \eta$ and its moment generating function is $ \exp(\frac{\eta \lambda}{1-2\lambda})(1-2\lambda)^{-d/2}$. Therefore, the CGF of $\ell(\widetilde W,\widetilde Z)$ is given by
\begin{equation*}
  \Lambda_{\ell(\widetilde W,\widetilde Z)}(\lambda) = - (d \sigma_\ell^2 +\eta) \lambda +\frac{\eta\lambda}{1-2\sigma_\ell^2 \lambda}- \frac{d}{2} \log(1-2\sigma_\ell^2\lambda),
\end{equation*}
for $\lambda \in (-\infty, \frac{1}{2\sigma_\ell^2})$.
Since $\mathrm{gen}(P_{W|S},P_Z)\ge 0$, we only need to consider the case $\lambda <0$. It can be shown that:
\begin{align}
\Lambda_{\ell(\widetilde W,\widetilde Z)}(\lambda) &= - d \sigma_\ell^2 \lambda - \frac{d}{2} \log(1-2\sigma_\ell^2\lambda) +\frac{2\sigma_\ell^2\eta\lambda^2}{1-2\sigma_\ell^2\lambda} \nn \\
& = \frac{d}{2} (-u-\log(1-u)) +\frac{2\sigma_\ell^2\eta\lambda^2}{1-2\sigma_\ell^2\lambda},
\end{align}
where $u\triangleq2\sigma_\ell^2\lambda$. Further note  that
\begin{align}
-u-\log(1-u) \le \frac{u^2}{2},\ u&<0,\\
\frac{2\sigma_\ell^2\eta\lambda^2}{1-2\sigma_\ell^2\lambda} \le 2\sigma_\ell^2\eta\lambda^2,\ \lambda&<0.
\end{align}
We have the following upper bound on the CGF of $\ell(\widetilde{W},\widetilde{Z})$:
\begin{equation}
\Lambda_{\ell(\widetilde W,\widetilde Z)}(\lambda) \le (d\sigma_\ell^4+2\sigma_\ell^2\eta)\lambda^2,\quad \lambda<0,
\end{equation}
which means that $\ell(\widetilde W,\widetilde Z)$ is $\sqrt{d\sigma_\ell^4+2\sigma_\ell^2\eta}$-sub-Gaussian for $\lambda<0$.
Combining the results in \eqref{eq:mean_ISMI}, Lemma~\ref{lemma:ISMI} gives the following bound
\begin{align*}
  &\overline{\text{gen}}(P_{W|S},P_S)\le\\\nn
  &\quad\sqrt{\frac{d^2 \sigma_{\ell}^4 +2d \sigma_{\ell}^2\eta}{2} \log(1+ \frac{\sigma_0^2\sigma_Z^2}{(n-1) \sigma_0^2\sigma_Z^2 +n \sigma_0^2 \sigma^2+\sigma^4})}.
\end{align*}
If $\sigma^2=\frac{n}{2\alpha}$ is fixed, i.e., $\alpha = \mathcal{O}(n)$, then as $n \to \infty$, $\sigma_{\ell}^2 = \mathcal{O}(1)$, and the above bound is $\mathcal{O}\left(\frac{1}{\sqrt{n}}\right)$.

\section{Proof of Theorem~\ref{Theorem: Sub Gaussian extension}}\label{app: Average Upper Bound Details}

Using the \citep[Theorem~1]{xu2017information} and considering the non-negativity of generalization error, we have:
\begin{align}
    \overline{\text{gen}}(P_{{W}|S},\mu) \leq \sqrt{\frac{2\sigma^2 I(W;S)}{n}}
\end{align}
By the Theorem~\ref{Theorem: Gibbs Result}, the following holds:
\begin{equation}
     \frac{I_{\mathrm{SKL}}(W;S)}{\alpha }  \leq \sqrt{\frac{2\sigma^2 I(W;S)}{n}}
\end{equation}
As the $C_E \times I(W;S)\leq L(W;S)$ we have:
\begin{equation}
     \frac{(1+C_E) I(W;S)}{\alpha } \leq \sqrt{\frac{2\sigma^2 I(W;S)}{n}}
\end{equation}
which is true for
\begin{equation}
  \sqrt{\frac{ 2\sigma^2 I(W;S)}{n}}\leq \frac{2\sigma^2\alpha}{n(1+C_E)}.
\end{equation}
It completes the proof.

\end{document}